\documentclass{article}

\PassOptionsToPackage{numbers}{natbib}
\usepackage[preprint]{neurips_2019}

\usepackage[utf8]{inputenc} %
\usepackage[T1]{fontenc}    %
\usepackage{url}            %
\usepackage{booktabs}       %
\usepackage{amsfonts}       %
\usepackage{nicefrac}       %
\usepackage{microtype}      %

\usepackage{algorithm}
\usepackage[noend]{algorithmic}
\usepackage{amsmath,amssymb}
\usepackage{amsthm}
\usepackage[title]{appendix}
\usepackage{graphicx}

\usepackage{multirow, makecell}

\makeatletter
\newcommand{\algrule}[1][.2pt]{\par\vskip.2\baselineskip\hrule height #1\par\vskip.5\baselineskip}
\makeatother

\title{Learning Effective Loss Functions Efficiently}

\author{
  Matthew Streeter \\
  Google Research \\
  Mountain View, CA \\
  \texttt{mstreeter@google.com} \\
}

\newtheorem{theorem}{Theorem}

\newenvironment{theoremN}[1]
  {\innertheoremN}
  {\endinnertheoremN}

\newcommand{\argmin}{\mathrm{argmin}}

\newcommand{\ee}{\ensuremath{\mbox{ .}}}

\newcommand{\grad}{\nabla}
\newcommand{\ignore}[1]{}

\newcommand{\norm}[1]{\| #1 \| }
\newcommand{\reals}[0]{\mathbb{R}}

\newcommand{\set}[1]{\ensuremath{\left\{#1\right\}}}
\newcommand{\surl}[1]{\begin{small}\url{#1}\end{small}}
\newcommand{\tup}[1]{\langle#1\rangle}

\newenvironment{varalgorithm}[1]
  {\algorithm}
  {\endalgorithm}

\newcommand{\concat}{\ensuremath{\frown}}
\newcommand{\conv}{\mathrm{convex\_pwl}}
\newcommand{\err}{e}  %
\newcommand{\feasible}{\mathcal{F}}

\renewcommand{\l}{\ell}  %
\renewcommand{\L}{\mathcal{L}}  %
\newcommand{\mfv}{\phi}  %

\newcommand{\ve}{\tilde e}  %
\newcommand{\w}{\theta}
\newcommand{\W}{\Theta}

\begin{document}

\maketitle

\begin{abstract}
We consider the problem of learning a loss function which,
when minimized over a training dataset, yields a model that approximately
minimizes a validation error metric.  Though learning an optimal loss
function is NP-hard, we present an anytime algorithm
that is asymptotically optimal in the worst case, and is
provably efficient in an idealized ``easy'' case.
Experimentally, we show that this algorithm can be used to tune
loss function hyperparameters orders of magnitude faster than
state-of-the-art alternatives.  We also show that our algorithm
can be used to learn novel and effective loss functions on-the-fly during
training.
\end{abstract}

\section{Introduction} \label {sec:intro}

Most machine learning models are obtained by minimizing a loss function,
but optimizing the training loss is rarely (if ever) the ultimate goal.
Instead, the model is judged based on its performance on test data not
seen during training, using a performance metric that may be only loosely
related to the training loss (e.g., top-1 error vs. log loss).
The ultimate value of a model therefore depends critically on the
loss function one chooses to minimize.

Despite the importance of choosing a good loss function,
it is unclear that the loss functions typically used in machine learning
are anywhere close to optimal.  For ImageNet classification, for example,
state-of-the-art models minimize log loss over the training
data, but the models are evaluated in terms of top-1 or top-5 accuracy.  Could
some other loss function lead to better results for these metrics?

In this work we seek to learn a loss function that, when (approximately)
minimized over the training data, produces a model that performs well on
test data according to some error metric.  The error metric need not be
differentiable, and may be only loosely related to the loss function.

Building on recent work on learning regularizers \cite{streeter2019learning},
we present a convex-programming-based algorithm that takes as input
observed data from training a small number of models, and produces as output
a loss function.  This algorithm can be used to tune loss function
hyperparameters, or to adjust the loss function
on-the-fly during training.  The algorithm comes with appealing theoretical
guarantees, and performs very well in our experiments.

Importantly, in contrast to previous work \cite{streeter2019learning},
our algorithm can make use of gradient information in the case where
the error metric is differentiable (or can be approximated by a differentiable
proxy function).  As we will show, using gradient information can
dramatically accelerate the search for a good loss function, and allows
us to efficiently discover loss functions with hundreds of hyperparameters
on-the-fly during training.

\subsection {Problem Statement} \label{sec:problem_statement}

We consider a general learning problem where the goal is to produce a
model from some set $\W \subseteq \reals^n$ of models, so as
to minimize a \emph{test error} $\err: \W \rightarrow \reals_{\ge 0}$.
Our model is obtained by minimizing a training loss
$\l: \W \rightarrow \reals_{\ge 0}$, which belongs to a set $\L$ of possible
loss functions.  We would like to find the $\l \in \L$ that, when minimized,
produces the lowest test error.  That is, we wish to solve the
bilevel minimization problem:
\begin{equation} \label{eq:bilevel}
  \min_{\l \in \L} \set { \err(\hat \w(\l)) }
  \mbox { where }
  \hat \w(\l) \equiv \argmin_{\w \in \W} \set {\l(\w)} \ee
\end{equation}
We assume that for any loss function $\l \in \L$, we can (approximately)
minimize $\l$ to
obtain $\hat \w(\l)$,
and that for any model $\w \in \W$, we can compute a
\emph{validation error} $\ve(\w)$, which is an estimate of test error.
In some cases, we may also be able to compute the gradient of validation error,
$\grad \ve(\w)$.

We will consider the case in which $\L$ is the set of linear functions of some user-provided, problem-specific feature vector $\mfv: \W \rightarrow \reals^k$.  Specifically, for a given feasible set $\feasible \subseteq \reals^k$, we assume
\[
        \L = \set{\l_\lambda\ |\ \lambda \in \feasible}\mbox{, where } \l_\lambda(\w) \equiv \lambda \cdot \mfv(\w) \ee
\]
Our goal is therefore to find the $\lambda \in \feasible$ that minimizes
\eqref{eq:bilevel}.

\subsection{Applications} \label {sec:applications}

\newcommand{\logloss}{\mathrm{logloss}}
\newcommand{\lex}{\l_{\mathrm{example}}}
\newcommand{\relu}{\mathrm{relu}}

The problem of learning an optimal linear loss function has many applications.  Perhaps the most obvious application is tuning loss function hyperparameters.  As an example, suppose we wish to do softmax regression with L1 and L2 regularization.  Our loss function is of the form:
\begin{equation} \label{eq:regularized_softmax}
        \l(\w) = \lambda_1 \norm{\w}_1 + \lambda_2 \norm{\w}_2^2 + \logloss(\w) \ee
\end{equation}
This loss function is linear with respect to the feature vector
$\mfv(\w) = \tup{\norm{\w}_1, \norm{\w}_2^2, \logloss(\w) }$.
Thus, finding an optimal loss function of the form $\lambda \cdot \mfv(\w)$,
optimizing over the feasible set $\feasible = \set{\lambda \in \reals^3_{\ge 0}\ |\ \lambda_3 = 1}$, will give us the optimal values of the
hyperparameters $\lambda_1$ and $\lambda_2$.

As a second example, suppose we wish to train an ImageNet classifier using data augmentation.  Given a set of $k$ possible image transformations (e.g., flipping horizontally, converting to grayscale), we apply a transformation drawn
randomly from some distribution whenever we train on an image.
The expected loss is of the form:
\begin{equation}
        \textstyle{\l(\w) = \sum_{j=1}^k p_j \l_j(\w)}
\end{equation}
where $\l_j$ is the log loss on a version of the ImageNet training set to
which transformation $j$ has been applied.  Finding an optimal
probability distribution is equivalent to finding an optimal loss of
the form $\lambda \cdot \tup{\l_1, \l_2, \ldots, \l_k}$ (which we can scale by
a $\frac {1} {\norm{\lambda}_1}$ factor to convert to the desired form).

As a final example, suppose we again wish to do softmax regression, but rather
than assuming a regularizer of a specific form (e.g., L1 or L2)
we wish to use a \emph{learned} convex function $r$:
\begin{equation} \label{eq:convex_regularizer}
        \textstyle{\l(\w) = \logloss(\w) + \sum_{i=1}^n r(\w_i)} \ee
\end{equation}
To find an approximately optimal loss function of this
form, we may require $r \in \conv(X)$, where $\conv(X)$ is the set of convex,
piecewise-linear functions that change slope at a predefined, finite set $X \subset \reals$ of points.
It can be shown that
this is equivalent to
the set of non-negative linear combinations of the functions
$\set{f_{\sigma, a}\ |\ \sigma \in \set{-1, 1}, a \in X}$,
where $f_{\sigma, a}(x) \equiv \max\set{0, \sigma(x-a)}$.
Using this fact, we can
write $\l$ as a linear function of a feature vector of length $1 + 2|X|$, whose first component
is $\logloss(\w)$, and whose remaining components are of the form
$\sum_{i=1}^n f_{\sigma, a}(\w_i)$.  By learning a linear loss function of this
form, we can discover novel, problem-specific regularizers.

\subsection{Summary of Results} \label {sec:summary}

We first consider the computational complexity of computing an optimal
linear loss function.  We find that:
\begin{itemize}
\item Computing an optimal linear loss function is NP-hard,
	even under strong assumptions about the set of models
		$\W$, the validation error $\ve$, and the feature vector
		$\mfv$.
\item However, if $\W$ is finite, an optimal loss function can be computed in time polynomial in $|\W|$.%
\end{itemize}

These findings suggest that we might select a finite set $\W_0 \subset \W$ of
models, then compute (in time polynomial in $|\W_0|$) a loss function that
is optimal when minimized over $\W_0$ (rather than over all of $\W$).  One
might hope that if $\W_0$ is sufficiently ``representative'', such a loss function would
also give good results when minimized over all of $\W$.

How big does $\W_0$ have to be in practice?  We address this question both
theoretically and experimentally.
Theoretically, we show that in the special case where $\ve(\w) = \lambda^* \cdot \mfv(\w)$, we can recover $\lambda^*$ 
after computing $\ve(\w)$ and $\grad \ve(\w)$ for a
\emph{single} model $\w$.
Experimentally, we show:
\begin{itemize}
\item When used to tune loss function hyperparameters based on results of
	full training runs, our algorithm
can outperform state-of-the-art alternatives by multiple orders of magnitude.
\item By tuning the loss function online, we can achieve test error
competitive with the results of extensive hyperparameter tuning during
the course of a single training run.
\end{itemize}

\section{What Makes a Good Loss Function?}

A good loss function is one that we can (approximately) minimize, and one
whose argmin has low test error.
To ensure that the loss functions we consider can be approximately minimized,
we confine our attention to linear functions of a user-provided feature
vector (which
can be minimized efficiently if, for example, each component of the feature
vector is a convex function of $\w$).
How can we guarantee that the argmin of training loss has low test error?
Assume we have already trained a small set $\W_0$ of models, and estimated
the test error of each of them using a validation set.
Given this data, we would like to
produce a new loss function that, when minimized, yields a model with
better validation error than any model we have already trained.

Ideally, we would find a loss function $\l$ such that $\l(\w) = \err(\w)\ \forall \w \in \W$, where $\err(\w)$ is the test error.  Minimizing $\l$ would
then give the best possible test error.
With this in mind, we might attempt to find an $\l$ that
estimates validation error as accurately as possible, for example in terms of
mean squared error over all $\w \in \W_0$.  Unfortunately, the argmin
of such a loss function may be far from optimal.
Figure~\ref{fig:loss_functions} illustrates this point for a one-dimensional
model $\w$, where test error is a piecewise-linear function of $\w$, but
training loss is constrained to be a quadratic function of $\w$.

\newcommand{\wstarz}{\w^*_0}

\begin{figure} [h]
	\begin{center}
	\includegraphics[width=0.49\linewidth]{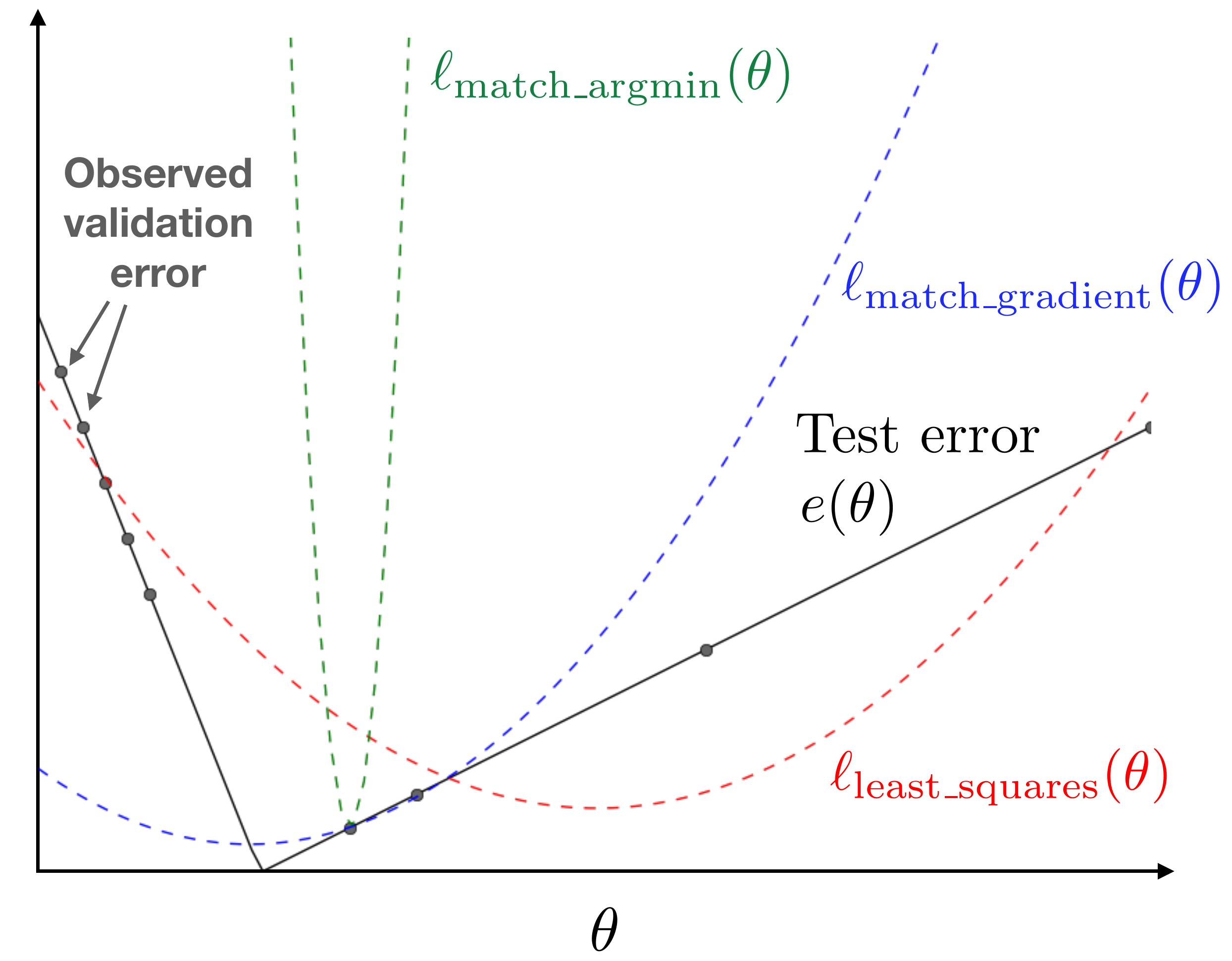}
	\caption{Comparison of three quadratic loss functions, on a one-dimensional minimization problem.}
\label{fig:loss_functions}
\end{center}
\end{figure}

To address this, we might seek a loss function that has the same argmin
as validation error when minmized over $\W_0$ (rather than over all $\W$).
Letting $\w^*_0$ be the model in $\W_0$ with least validation error,
we can easily construct such a function by setting
$\l(\w) = \norm{\w - \w^*_0}_2^2$.
However, this loss function is clearly not useful, because minimizing it gives
us back a model we have already seen.

Finally, we might seek a loss function that, in addition to having
the argmin-matching property, approximates
validation error well for models close to $\w^*_0$.  In the case where
$\ve$ is differentiable, this can be achieved by seeking a loss function $\l$
with $\grad \l(\w^*_0) \approx \grad \ve(\w^*_0)$, subject to the constraint
$\argmin_{\w \in \W_0} \set{ \l(\w)  } = \w^*_0$.  Minimizing such a loss
function often leads to a model with better validation (and test)
error, as illustrated in Figure~\ref{fig:loss_functions}.

\section{Learning Linear Loss Functions} \label{sec:learn}

We now present an algorithm for approximately solving the optimization
problem defined in \S\ref{sec:problem_statement}.  Recall that,
given
a set $\W \subseteq \reals^n$ of models,
a user-provided feature vector $\mfv: \W \rightarrow \reals^k$,
loss functions of the form $\l_\lambda(\w) = \lambda \cdot \mfv(\w)$,
and a feasible set $\feasible \subseteq \reals^k$ of $\lambda$ values,
our goal is to solve the bilevel minimization problem:
\begin{equation}\label{eq:bilevel_linear}
        \min_{\lambda \in \feasible} \set { \err(\hat \w(\l_\lambda)) }
  \mbox {, where }
	\hat \w(\l_\lambda) = \argmin_{\w \in \W} \set {\lambda \cdot \mfv(\w)}
	\ee
\end{equation}
Recall that $\err(\w)$ is the test error of $\w$, which we may estimate by
computing the validation error, $\ve(\w)$.
As discussed in \S\ref{sec:applications}, a solution to this problem has numerous practical applications, including tuning regularization
hyperparameters and learning data augmentation policies.

Ideally we would develop an algorithm that always recovers an optimal
$\lambda$ after evaluating validation loss a small (i.e., polynomial
in $n$ and $k$) number of times.
Unfortunately, doing so is NP-hard, even in the
special case when $\W$ is a convex set, $\mfv$ and $\err$ are convex
functions, and $\ve = \err$.
\begin{theorem} \label{thm:hardness}
Minimizing \eqref{eq:bilevel_linear} is NP-hard, even in the special case when
$\W = [0, 1]^n$, $\feasible = \reals^k$, $\mfv(\w) = \w$, $\ve(\w)$ is a convex function, and $\ve(\w) = \err(\w) \ \forall \w \in \W$.
\end{theorem}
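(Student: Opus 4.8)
The plan is to reduce from a known NP-hard problem by encoding its combinatorial structure into the bilevel minimization. The natural candidate is a partition-type or subset-selection problem — for instance, MAX-CUT, vertex cover, or 3-SAT — where the constraint $\W = [0,1]^n$ lets each coordinate $\w_i \in [0,1]$ represent a "soft" choice, and convexity of $\ve$ forces the minimizer of $\l_\lambda$ to land at vertices of the cube (or at least to behave predictably) once $\lambda$ is fixed. The key mechanism I would exploit is this: since $\feasible = \reals^k$ and $\mfv(\w) = \w$, the inner problem is simply $\hat\w(\l_\lambda) = \argmin_{\w \in [0,1]^n} \lambda \cdot \w$, which is trivial — coordinate $i$ of the minimizer is $0$ if $\lambda_i > 0$, is $1$ if $\lambda_i < 0$, and is arbitrary in $[0,1]$ if $\lambda_i = 0$. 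So choosing $\lambda$ (up to sign and the degenerate zero case) is exactly equivalent to choosing a vertex of the hypercube, i.e., a subset $S \subseteq \{1,\dots,n\}$. The outer problem thus becomes: minimize $\err(\mathbb{I}_S)$ over all subsets $S$, where $\mathbb{I}_S \in \{0,1\}^n$ is the indicator vector and $\err$ is a convex function we get to design.

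The crux is then to show that for a suitable NP-hard objective — I would aim for something like MAX-CUT or an independent-set / set-cover variant — there is a \emph{convex} function $\err: [0,1]^n \to \reals_{\ge 0}$ whose value at each vertex $\mathbb{I}_S$ equals (a monotone transform of) the combinatorial objective on $S$. For MAX-CUT on a graph $G=(V,E)$ with $|V|=n$, the cut size of $S$ is $\sum_{(i,j)\in E} (\w_i + \w_j - 2\w_i \w_j)$ evaluated at $\w = \mathbb{I}_S$; but this is not convex in $\w$. I would instead use the fact that on the hypercube vertices $\w_i^2 = \w_i$, so I can replace the problematic bilinear term: the function $g(\w) = \sum_{(i,j)\in E}(\w_i - \w_j)^2$ is convex (it is a sum of squares of linear forms, hence a positive-semidefinite quadratic form), and at a vertex $\mathbb{I}_S$ it equals $|\{(i,j)\in E : i,j \text{ on opposite sides}\}|$, exactly the cut size. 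So take $\err(\w) = |E| - g(\w)$... except that is concave, not convex. The fix is to note we are \emph{minimizing} $\err$, so we want a convex $\err$ that is \emph{small} exactly when the cut is large; one can use $\err(\w) = C \cdot h(\w) - g(\w)$ only if the result stays convex, which it won't.

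Thus the genuine obstacle is reconciling two tensions: (i) $\err$ must be convex, and (ii) minimizing $\err$ over the vertices must solve a maximization-flavored NP-hard problem, while convex functions are "easy to minimize" on convex sets. The resolution — and the step I expect to be the real work — is to choose the hard problem and the encoding so that the \emph{vertices} of the cube are where the action is, but the convex minimizer over the whole cube is uninformative: that is, $\err$ restricted to $[0,1]^n$ is minimized at an interior or boundary-but-non-vertex point (say, the barycenter), yet the bilevel structure forbids $\lambda$ from reaching that point because $\hat\w(\l_\lambda)$ is \emph{always} a vertex (or a face, when some $\lambda_i = 0$, but one can perturb to break ties, or argue the infimum over $\feasible$ is attained in the limit at vertices). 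Concretely, I would pick an NP-hard problem expressible as minimizing a convex function over $\{0,1\}^n$ — e.g., certain quadratic pseudo-Boolean minimization problems with PSD-on-the-subspace structure, or I would encode a problem like "minimize $\mathbf{1}\cdot\w$ subject to linear covering constraints" by folding the covering constraints into $\err$ via a convex penalty (so $\err(\w) = \mathbf{1}\cdot\w + M\sum_c \max\{0, 1 - \sum_{i\in c}\w_i\}$, which is convex; at vertices this is set cover, and the penalty $M$ large forces feasibility). The final steps are then routine: verify $\err$ is convex and nonnegative (shift if needed), verify $\err = \ve$ trivially, verify that an oracle solving the bilevel problem lets us read off the optimal subset from the signs of the returned $\lambda$ (handling the $\lambda_i = 0$ ties by a standard perturbation argument), and conclude that a polynomial-time algorithm for \eqref{eq:bilevel_linear} would solve Set Cover (or whichever NP-hard problem we chose), giving the contradiction.
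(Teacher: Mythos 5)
Your proposal is correct and follows essentially the same route as the paper: reduce the bilevel problem to minimizing an arbitrary convex $\ve$ over the vertices $\set{0,1}^n$ (via the sign pattern of $\lambda$, with ties at $\lambda_i = 0$ handled by perturbation/convention), then encode a constrained 0/1 linear program as a convex function. The only cosmetic difference is that the paper reduces from {\sc 0/1 Integer Programming} using an extended-valued convex indicator of $Ax \le b$, whereas you reduce from Set Cover using a finite hinge penalty $M\sum_c \max\set{0, 1 - \sum_{i\in c}\w_i}$; both work, and your MAX-CUT detour is correctly abandoned.
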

\begin{proof}[Proof (Sketch)]
If $\mfv(\w) = \w$, then $\hat \w(\l_\lambda) = \argmin_{\lambda \in \reals^k} \set{\lambda \cdot \w}$.
Assuming ties are broken appropriately in cases where the argmin is not unique,
we have $\hat \w(\l_\lambda) \in \set{0,1}^n$.
Furthermore, $\set{\hat \w(\l_\lambda)\ |\ \lambda \in \reals^k} = \set{0, 1}^n$.
Minimizing \eqref{eq:bilevel_linear} is therefore equivalent to computing
$\min_{x \in \set{0, 1}^n} \set{\ve(x)}$, for an arbitrary convex function
$\ve$.  This optimization problem can be shown to be NP-hard, using a reduction
from {\sc 0/1 Integer Programming}.
\end{proof}
A formal proof of Theorem~\ref{thm:hardness} is given in Appendix A.

Though minimizing \eqref{eq:bilevel_linear} is NP-hard in general,
in the special case
where $\W$ is finite, it can be solved efficiently using a variant of the
LearnLinReg algorithm \cite{streeter2019learning}.
\begin{theorem} \label{thm:polytime}
If $\W$ is finite,
$\feasible$
is a hypercube,
and $\err = \ve$,
then \eqref{eq:bilevel_linear} can be minimized
in expected time $O(m d^{2.37} \log d)$, where $m = |\W|$
and $d = \max \set{m, |\mfv(\w)|}$,
assuming $\hat \w(\l_\lambda)$ is unique $\forall \lambda \in \feasible$.
\end{theorem}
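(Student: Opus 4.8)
The plan is to reduce the problem to a geometric/combinatorial search over the finitely many distinct models that can ever be produced as $\hat\w(\l_\lambda)$ for $\lambda \in \feasible$. The first step is to observe that, since $\W$ is finite with $|\W| = m$, the map $\lambda \mapsto \hat\w(\l_\lambda) = \argmin_{\w \in \W}\set{\lambda\cdot\mfv(\w)}$ partitions $\feasible$ into at most $m$ regions, one per achievable model. For each $\w \in \W$, the set of $\lambda$ for which $\w$ is (uniquely) the minimizer is $\set{\lambda \in \feasible \mid \lambda\cdot\mfv(\w) < \lambda\cdot\mfv(\w')\ \forall \w' \neq \w}$, which is an intersection of $\feasible$ (a hypercube) with $m-1$ open halfspaces — a convex polytope described by $O(m + k)$ linear constraints. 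So the question ``is $\w$ achievable?'' is a linear feasibility problem, and since $\err = \ve$ is known on all of $\W$, once we know which models are achievable we simply return the achievable $\w$ minimizing $\ve(\w)$.

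Next I would turn this into the LearnLinReg-style convex program. Rather than enumerate all $m$ polytopes naively, the idea (following \cite{streeter2019learning}) is: sort the models by validation error $\ve$, and walk from best to worst, at each step solving one linear feasibility program to test whether the current candidate $\w$ is achievable, i.e. whether there exists $\lambda \in \feasible$ with $\lambda\cdot\mfv(\w) \le \lambda\cdot\mfv(\w')$ for all $\w'$ (with an appropriate strict-inequality / tie-breaking margin, which is where the uniqueness assumption on $\hat\w(\l_\lambda)$ is used — it guarantees we never need to worry about a measure-zero tie set). The first $\w$ in sorted order that passes the feasibility test is the optimum, and we stop. In the worst case we solve $O(m)$ linear programs, each over $k$ variables with $O(m + k)$ constraints; writing $d = \max\set{m, k}$, each LP is over $O(d)$ variables and $O(d)$ constraints.

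The running time then follows from plugging in the current best bound for linear programming / linear feasibility on $d$ variables and $d$ constraints, which is $O(d^{2.37}\log d)$ via fast matrix multiplication (the exponent $2.37$ is the matrix-multiplication constant; this is exactly the interior-point-plus-fast-linear-algebra bound, and it is randomized, hence ``expected time''). Multiplying by the $O(m) = O(d)$ iterations of the outer loop, plus the $O(m\log m)$ sort, gives $O(m\cdot d^{2.37}\log d)$, as claimed. I expect the main obstacle to be the careful handling of the strict inequalities in the feasibility test: the set of $\lambda$ making $\w$ the \emph{unique} argmin is open, so a literal LP is infeasible in degenerate cases, and one must argue (using that $\hat\w(\l_\lambda)$ is unique for every $\lambda \in \feasible$, so in particular the ``good'' region has nonempty interior whenever it is nonempty) that testing the closed relaxation $\lambda\cdot\mfv(\w) \le \lambda\cdot\mfv(\w')$ together with a check that the resulting $\lambda$ indeed yields $\hat\w(\l_\lambda) = \w$ is correct — or, alternatively, introduce a slack variable $t$ maximizing the minimum margin and check $t > 0$. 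Verifying that this bookkeeping is sound, and that it does not inflate the per-iteration LP size beyond $O(d)$, is the delicate part; the rest is a direct appeal to known LP complexity bounds and to the analysis already carried out for LearnLinReg in \cite{streeter2019learning}.
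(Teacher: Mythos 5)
Your proposal is correct and follows essentially the same route as the paper: sort the models by validation error, and for each candidate in order test via a linear program whether some $\lambda \in \feasible$ makes it the argmin (using the constraints $\lambda \cdot \mfv(\w) \le \lambda \cdot \mfv(\w')$), returning the first feasible candidate, with the $O(d^{2.37}\log d)$ LP bound giving the stated running time. Your extra care about strict inequalities and the role of the uniqueness assumption is a reasonable elaboration of a point the paper handles only implicitly, but it does not change the argument.
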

\begin{proof}
Let the elements of $\W$ be indexed in ascending order of validation error,
so $\ve(\w_1) \le \ve(\w_2) \le \ldots \le \ve(\w_m)$.
If there exists a vector $\lambda \in \feasible$ such that $\hat \w(\l_\lambda) = \w_1$, then this $\lambda$ minimizes \eqref{eq:bilevel_linear}.
The constraint $\hat \w(\l_\lambda) = \w_1$ is equivalent to the system of
linear inequality constraints: $\lambda \cdot \mfv(\w_1) \le \lambda \cdot \mfv(\w_i)$ for $1 \le i \le m$.
Whether these constraints are satisfiable for some $\lambda \in \feasible$
can be determined using linear
programming, and the LP can be solved to machine precision
in time $O(d^{2.37} \log d)$ \cite{cohen2019solving}.

If the LP is feasible, any
feasible point is an optimal solution to \eqref{eq:bilevel_linear}.
If not, we can solve a similar LP to check whether there exists a
$\lambda \in \feasible$ that satisfies $\hat \w(\l_\lambda) = \w_2$, and so on, stopping
as soon as we find an LP that is feasible.
Because $\hat \w(\l_\lambda) \in \W$ for all $\lambda$,
at least one of the LPs must be feasible.
\end{proof}

\newcommand{\istar}{i^*}
\newcommand{\cost}{\mathrm{cost}}

Building on Theorem~\ref{thm:polytime}, we now present the \ref{alg:lll}
algorithm
for learning a linear loss function, given as input a small set $\W_0$
of models whose validation error is known.  The idea of the
algorithm is to use the ``guess the argmin'' trick
used in the proof of Theorem~\ref{thm:polytime}, to find a $\lambda$ that
would be optimal if the loss was minimized over $\W_0$ rather than $\W$
(i.e., if we replace $\W$ by $\W_0$ in \eqref{eq:bilevel_linear}).
However, in the common case where many such $\lambda$ exist,
\ref{alg:lll} returns the one that minimizes a carefully-chosen
cost function that encourages $\l_\lambda$ to accurately predict validation
error.  For some $\epsilon \ge 0$, we minimize
\begin{equation} \label{eq:cost}
  \cost(\lambda, \alpha) \equiv \sum_{ \w \in \W_0 } ( \l_\lambda(\w) - \alpha\ \ve(\w)  )^2 + \overbrace{\epsilon \norm{ \grad \l_\lambda(\w) - \alpha\ \grad \ve(\w)  }_2^2}^{\text{optional, if $\ve$ is differentiable}}
\end{equation}
where $\alpha > 0$ is a learned multiplier used to convert validation error
to an appropriate scale.

\ref{alg:lll} has two desirable theoretical guarantees.  First, by an argument
simliar to the one used to prove Theorem~\ref{thm:polytime}, it runs in
polynomial time
and returns a loss function that would be optimal if the loss was minimized
over $\W_0$ rather than over $\W$, as summarized in Theorem~\ref{thm:convergence}.
Second, it is provably efficient in certain special cases, as shown
in Theorem~\ref{thm:efficiency}.

\newcommand{\llgradi}[1]{g_{#1}}
\newcommand{\llfvi}[1]{\mfv_{#1}}
\newcommand{\llmodeli}[1]{\w_{#1}}
\newcommand{\lllossi}[1]{\l_{#1}}
\newcommand{\lljacobi}[1]{J_{#1}}
\newcommand{\llvei}[1]{\ve_{#1}}

\begin{varalgorithm}{LearnLoss}
  \begin{algorithmic}
  \caption{}
  \label{alg:lll}
  \STATE {\bfseries Input:} Set of (validation error, feature vector) pairs
$\set{(\llvei{i}, \llfvi{i}) \ |\ 1 \le i \le m}$,
feasible hypercube $\feasible \subseteq \reals^k$,
scalar $\epsilon \ge 0$.
  \STATE {\bfseries Optional input}: gradient vectors
  $\llgradi{i} \in \reals^n$, and Jacobian matrices
  $\lljacobi{i} \in \reals^{n \times k}$,
  for $1 \le i \le m$.
  Here
  $\llgradi{i} = \grad \ve(\llmodeli{i})$, and
  column $j$ of $\lljacobi{i}$ is $\grad \mfv_j(\llmodeli{i})$,
  where $\llmodeli{i}$ is the model for pair $(\llvei{i}, \llfvi{i})$.
  \algrule
  \STATE Sort $(\llvei{i}, \llfvi{i})$ pairs in ascending order of validation error, and reindex so $\ve_1 \le \ve_2 \le \ldots \le \ve_m$.
  \FOR {$\istar$ from $1$ to $m$}
    \STATE Solve the following convex quadratic program:
\begin{equation*}
\begin{array}{llll}
  \text{minimize}_{\lambda \in \feasible, \alpha \in \reals_{+}} & \sum_{i=1}^m (\lllossi{i} - \alpha\ \llvei{i})^2  + \overbrace{\epsilon \sum_{i=1}^m \norm{\lljacobi{i} \lambda^T - \alpha\ \llgradi{i}}_2^2}^{\text{if gradients were provided as input}} \\
  \text{subject to}
        & \lllossi{i} = \lambda \cdot \llfvi{i} & \forall i \\
  & \lllossi{i^*} \le \lllossi{i} & \forall i \\
\end{array}
\end{equation*}
  \STATE If the QP is feasible, return $\lambda$.
  \ENDFOR
\end{algorithmic}
\end{varalgorithm}

\begin{theorem} \label{thm:convergence}
Let $\W_0 \subseteq \W$ be a finite set of models.
Given as input the set of pairs $\set{(\ve(\w), \mfv(\w)) \ |\ \w \in \W_0}$, \ref{alg:lll} returns a vector $\hat \lambda \in \argmin_{\lambda \in \feasible} \set { \ve(\hat \w_0(\l_\lambda)) }$,
where $\hat \w_0(\l_\lambda) = \argmin_{\w \in \W_0} \set { \lambda \cdot \mfv(\w) }$.  It runs in time $O(m k^4)$, where $m = |\W_0|$ and $k = |\mfv(\w)|$.
\end{theorem}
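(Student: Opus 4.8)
The plan is to follow the proof of Theorem~\ref{thm:polytime} closely, adding only the handling of the quadratic objective and of ties in the argmin. I would first record the key equivalence. After eliminating the auxiliary variables $\lllossi{i}$ through the equality constraints, the inequality constraints of the $\istar$-th QP become $\lambda \cdot \llfvi{\istar} \le \lambda \cdot \llfvi{i}$ for all $i$, which say precisely that $\llmodeli{\istar}$ attains $\min_{\w \in \W_0} \set{\lambda \cdot \mfv(\w)}$; the remaining variable $\alpha \ge 0$ occurs only in the objective, so its presence never affects feasibility. Hence the $\istar$-th QP is feasible precisely when some $\lambda \in \feasible$ makes $\llmodeli{\istar}$ a minimizer of $\lambda \cdot \mfv$ over $\W_0$. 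Since $\W_0$ is finite and nonempty, $\hat\w_0(\l_\lambda)$ is well defined for every $\lambda \in \feasible$, so the QP indexed by that minimizer is feasible; thus at least one QP is feasible, \ref{alg:lll} terminates within $m$ iterations, and (the objective being a convex quadratic bounded below by $0$ over a closed feasible set, a minimizer exists) it returns a minimizer $\hat\lambda$ of the corresponding QP.

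Next I would show that $\hat\lambda$ achieves $V^* \equiv \min_{\lambda \in \feasible} \set{\ve(\hat\w_0(\l_\lambda))}$. This minimum is attained, and since every $\hat\w_0(\l_\lambda)$ lies in $\W_0$ we have $V^* = \llvei{j^*}$ for some index $j^*$, witnessed by a $\lambda^*$ with $\hat\w_0(\l_{\lambda^*}) = \llmodeli{j^*}$. Because the pairs are processed in nondecreasing order of validation error, if the loop reaches $\istar = j^*$ without returning then the $j^*$-th QP is feasible (witness $\lambda^*$) and \ref{alg:lll} returns by then; so it returns at some $\istar \le j^*$. When it returns at $\istar$, $\hat\lambda$ is feasible for that QP, so $\llmodeli{\istar}$ minimizes $\hat\lambda \cdot \mfv$ over $\W_0$; moreover no earlier QP was feasible, so by the equivalence above (applied to $\hat\lambda$) no $\llmodeli{i}$ with $i < \istar$ is a minimizer either, i.e.\ $\llmodeli{\istar}$ is the lowest-indexed minimizer. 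Under the convention that $\argmin$ ties are broken toward lower validation error, this gives $\hat\w_0(\l_{\hat\lambda}) = \llmodeli{\istar}$, so $\ve(\hat\w_0(\l_{\hat\lambda})) = \llvei{\istar} \le \llvei{j^*} = V^*$. Since $\hat\lambda \in \feasible$ also forces $\ve(\hat\w_0(\l_{\hat\lambda})) \ge V^*$, equality holds and $\hat\lambda \in \argmin_{\lambda \in \feasible} \set{\ve(\hat\w_0(\l_\lambda))}$.

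For the running time, the outer loop runs at most $m$ times. At each iteration, after substituting out the $\lllossi{i}$, we solve a convex QP in the $k+1$ variables $(\lambda, \alpha)$, whose objective $\sum_i (\lambda \cdot \llfvi{i} - \alpha\, \llvei{i})^2$ expands to a fixed positive-semidefinite quadratic form that can be assembled once as a $(k+1)\times(k+1)$ matrix, and whose constraints are the hypercube constraints on $\lambda$, $\alpha \ge 0$, and $m$ half-spaces; solving such a QP takes $O(k^4)$ time, for a total of $O(m k^4)$. The step I expect to be the main obstacle is the bookkeeping around non-uniqueness of $\hat\w_0(\l_\lambda)$: one must fix the tie-breaking convention up front and check, as above, that ``first feasible index $\istar$'' really does pin down $\hat\w_0(\l_{\hat\lambda}) = \llmodeli{\istar}$ — otherwise an adversarially broken tie could leave $\hat\lambda$ suboptimal. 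Everything else is routine and parallels Theorem~\ref{thm:polytime}.
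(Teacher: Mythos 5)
Your proposal is correct and follows essentially the same route as the paper, which proves Theorem~\ref{thm:convergence} only by reference to the ``guess the argmin'' argument of Theorem~\ref{thm:polytime}: sort by validation error, observe that feasibility of the $\istar$-th program is equivalent to some $\lambda \in \feasible$ making $\llmodeli{\istar}$ the loss minimizer over $\W_0$, and return at the first feasible index. Your added care about tie-breaking in $\hat\w_0(\l_\lambda)$ and the attainment of the QP minimum fills in details the paper leaves implicit (it assumes uniqueness of the argmin in Theorem~\ref{thm:polytime}), and is consistent with the paper's intent.
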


If $\W$ is finite, Theorem~\ref{thm:convergence} shows that \ref{alg:lll} is
asymptotically optimal as $\W_0 \rightarrow \W$.  Under what circumstances is
\ref{alg:lll} efficient?  To build intuition, we consider the idealized
case where there exists a linear loss function that \emph{perfectly} estimates
validation error (and is therefore optimal if $\ve = \err$).
In this case, \ref{alg:lll} can recover this loss
function very efficiently, as shown in Theorem~\ref{thm:efficiency}.

\newcommand{\rank}{\mathrm{rank}}
\newcommand{\sloss}{S_\mathrm{loss}}
\newcommand{\sgrads}{S_\mathrm{grads}}
\begin{theorem} \label{thm:efficiency}
Suppose that for some $\lambda^* \in \feasible$ and $\alpha^* > 0$,
we have $\lambda^* \cdot \mfv(\w) = \alpha^*\ \ve(\w) \ \forall \w \in \W$.
Let
$\llvei{i} \in \reals$,
$\llfvi{i} \in \reals^k$,
$\llgradi{i} \in \reals^n$,
and $\lljacobi{i} \in \reals^{n \times k}$
be defined as in the code for \ref{alg:lll}.
Then, \ref{alg:lll} returns
$\lambda^*$ if at least $k+1$ vectors in the set
$(\sloss \cup \sgrads) \subset \reals^{k+1}$ are
linearly independent, where:
\begin{enumerate}
\item $\sloss \equiv \set{\llfvi{i} \concat \tup{\llvei{i}} |\  i \in [m]}$, where $\concat$ denotes concatenation, and
\item $\sgrads \equiv \set {(\lljacobi{i} \concat \llgradi{i})_j\ |\ i \in [m], j \in [n]}$ if
gradients are provided, otherwise $\sgrads \equiv \emptyset$.
\end{enumerate}
\end{theorem}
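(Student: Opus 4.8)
The plan is to show that under the stated hypothesis, the optimal value of the quadratic program at $i^* = 1$ is zero, achieved at $\lambda = \lambda^*$, $\alpha = \alpha^*$, and that this is the \emph{unique} feasible point attaining it — hence \ref{alg:lll} returns $\lambda^*$ on the first iteration. First I would observe that $(\lambda^*, \alpha^*)$ is feasible for the $i^* = 1$ QP: since $\lambda^* \cdot \mfv(\w) = \alpha^* \ve(\w)$ for all $\w \in \W$, and $\w_1$ has minimal validation error after sorting, we get $\lambda^* \cdot \llfvi{1} = \alpha^* \llvei{1} \le \alpha^* \llvei{i} = \lambda^* \cdot \llfvi{i}$ for all $i$, so the constraint $\lllossi{1} \le \lllossi{i}$ holds (and $\alpha^* > 0$, $\lambda^* \in \feasible$). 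Moreover the objective vanishes at this point: every term $(\lllossi{i} - \alpha^* \llvei{i})^2$ is zero, and — differentiating the identity $\lambda^* \cdot \mfv(\w) = \alpha^* \ve(\w)$ — we also get $\lljacobi{i}^T \lambda^{*T} = \alpha^* \llgradi{i}$, so each gradient term is zero too.

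The next step is uniqueness. Because the objective is a sum of squares, any feasible $(\lambda, \alpha)$ with objective value zero must satisfy $\lambda \cdot \llfvi{i} = \alpha \llvei{i}$ for all $i \in [m]$, and (if gradients were provided) $\lljacobi{i}^T \lambda^T = \alpha \llgradi{i}$ for all $i, j$. Rewriting each such equation as a linear constraint on the vector $(\lambda, -\alpha) \in \reals^{k+1}$, the first family says $\big(\llfvi{i} \concat \tup{\llvei{i}}\big) \cdot (\lambda, -\alpha) = 0$, i.e. $(\lambda, -\alpha)$ is orthogonal to every vector in $\sloss$; the second family says the same for every vector in $\sgrads$ (the $j$-th row of $\lljacobi{i} \concat \llgradi{i}$ dotted with $(\lambda,-\alpha)$ is $(\lljacobi{i}^T\lambda^T - \alpha \llgradi{i})_j$). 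So the set of zero-objective points is exactly the orthogonal complement of $\mathrm{span}(\sloss \cup \sgrads)$. By hypothesis this span has dimension at least $k+1$, hence is all of $\reals^{k+1}$, so its orthogonal complement is $\set{0}$ — meaning the only zero-objective point is $(\lambda, -\alpha) = (0, 0)$... which is not quite what we want, so here I need to be careful: the correct statement is that the \emph{affine} solution set $\set{(\lambda,\alpha) : \text{all equations hold}}$ is a translate of that orthogonal complement. Concretely, if $(\lambda, \alpha)$ and $(\lambda^*, \alpha^*)$ both make the objective zero, then their difference $(\lambda - \lambda^*, -( \alpha - \alpha^*))$ is orthogonal to $\mathrm{span}(\sloss \cup \sgrads) = \reals^{k+1}$, forcing $\lambda = \lambda^*$ and $\alpha = \alpha^*$.

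Finally I would assemble the pieces: at iteration $i^* = 1$ the QP is feasible (witnessed by $(\lambda^*,\alpha^*)$), so \ref{alg:lll} returns a minimizer; the minimum objective value is $0$ since $(\lambda^*,\alpha^*)$ attains it; and by the uniqueness argument the minimizer is $(\lambda^*, \alpha^*)$, so the returned vector is $\lambda^*$. The main obstacle — really the only subtle point — is the bookkeeping in the uniqueness step: correctly identifying that "at least $k+1$ linearly independent vectors in $\sloss \cup \sgrads$" is precisely the condition making the homogeneous system $M (\lambda, -\alpha)^T = 0$ have only the trivial solution (where $M$ stacks the vectors of $\sloss \cup \sgrads$ as rows), and then translating that from the homogeneous to the affine setting to conclude equality with $(\lambda^*, \alpha^*)$ rather than with $0$. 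One should also note explicitly that the $\epsilon$ in front of the gradient terms is immaterial here: when $\epsilon > 0$ a zero-objective point must still zero out every gradient term, and when $\epsilon = 0$ (or no gradients provided) we simply have $\sgrads = \emptyset$ and the hypothesis must be met by $\sloss$ alone.
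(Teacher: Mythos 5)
Your proposal is correct and follows essentially the same route as the paper's proof: exhibit $(\lambda^*,\alpha^*)$ as a feasible, zero-cost point of the first QP, then use the assumption that $k+1$ of the vectors in $\sloss \cup \sgrads$ are linearly independent to conclude the zero-cost solution of the resulting linear system in $k+1$ unknowns is unique. Your write-up is in fact somewhat more careful than the paper's — explicitly verifying feasibility of the first QP, spelling out the homogeneous-versus-affine step in the uniqueness argument, and flagging the $\epsilon=0$ edge case — but the underlying argument is identical.
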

\begin{proof}
Because $\l_{\lambda^*}(\w) = \alpha^*\ \ve(\w)\ \forall \w \in \W$,
we have $\cost(\lambda^*, \alpha^*) = 0$.
Furthermore, because $\l_{\lambda^*}$ has the
same argmin as $\ve$, $\lambda^*$ is an optimal solution to the
first quadratic program considered by \ref{alg:lll} (i.e., the quadratic
program solved when $i^* = 0$).
Thus, \ref{alg:lll} will return the optimal vector $\lambda^*$, provided the
solution to the first quadratic program is unique.

In order to satisfy $\cost(\lambda, \alpha) = 0$, $\lambda$ and $\alpha$
must satisfy $m$ linear equations of the form
$\lambda \cdot \llfvi{i} - \alpha\ \llvei{i} = 0$.
If gradient information is provided,
$\lambda$ and $\alpha$ must, additionally, satisfy $m n$ equations of the
form $(\lljacobi{i} \lambda^T - \alpha \llgradi{i})_j = 0$.
This is a system of linear equations with $k + 1$ variables, and by assumption
at least $k+1$ of the equations are linearly independent, which guarantees a
unique solution.
\end{proof}

In particular, Theorem~\ref{thm:efficiency} shows that if a perfect loss
function exists, \ref{alg:lll} can recover it given $\ve(\w)$, $\mfv(\w)$, and
$\grad \ve(\w)$ for just \emph{one} model.
This is clearly a strong assumption that is unlikely to be literally
satisfied in practice.  Nevertheless, our experiments will show that
on certain real-world problems, \ref{alg:lll} achieves efficiency similar
to what Theorem~\ref{thm:efficiency} suggests.

\subsection{Tuning Loss Functions} \label{sec:tuneloss}

The \ref{alg:lll} algorithm suggests a natural iterative procedure for
tuning loss functions.
Let $\W_0$ be an initial
set of trained models (obtained, for example, as intermediate checkpoints
when minimizing an initial ``default'' loss function).
After computing the validation error of each $\w \in \W_0$, we run
\ref{alg:lll} to obtain a loss function $\l_1$.  We then minimize $\l_1$
to obtain a model, $\w_1$.  Computing the validation error of $\w_1$ then
provides an additional data point we can use to re-run \ref{alg:lll},
obtaining a refined loss function $\l_2$, and so on.
Pseudocode is given below.

\newcommand{\optimize}{\mathrm{train\_with\_warm\_start}}
\begin{varalgorithm}{TuneLoss}
\begin{algorithmic}
  \caption{}
  \label{alg:tl}
	\STATE {\bfseries Input:} validation error $\ve$,
	initial set of models $\W_0 \subseteq \W$, %
	feature vector function $\mfv: \W \rightarrow \reals^k$,
	initial warm-start model $\hat \w_0 \in \W_0$,
	feasible hypercube $\feasible$, scalar $\epsilon \ge 0$.
    \STATE Set $D_0 \leftarrow \set{(\ve(\w), \mfv(\w))\ |\ \w \in \Theta_0}$.
    \FOR {$i=1, 2, \ldots$}
	\STATE Set $\lambda_i \leftarrow \mathrm{LearnLoss}(D_{i-1}, \feasible, \epsilon)$.
	\STATE Set $\hat \w_i \leftarrow \optimize(\l_i, \hat \w_{i-1})$, where $\l_i(\w) \equiv \lambda_i \cdot \mfv(\w)$.
	\STATE Set $D_i \leftarrow D_{i-1} \cup \set{(\ve(\hat \w_i), \mfv(\hat \w_i))}$.
    \ENDFOR
\end{algorithmic}
\end{varalgorithm}

\ref{alg:tl} makes use of a subroutine, $\optimize$.  If this subroutine
runs an online algorithm such as AdaGrad \cite{duchi2011adaptive} for a
small number of mini-batches, then \ref{alg:tl} will adjust the loss function
online during training.  If the subroutine instead performs a full training
run (possibly ignoring the second argument), \ref{alg:tl} becomes a
sequential hyperparameter tuning algorithm.
\ref{alg:tl} can also be modified to provide the optional gradient
information in the calls to \ref{alg:lll}.

\section{Experiments} \label {sec:experiments}

We now apply \ref{alg:tl} to two problems discussed in \S\ref{sec:applications}: tuning loss function hyperparameters,
and learning novel convex regularizers on-the-fly during training.

\subsection{Methods} \label {sec:problems}

We consider image classification problems using four public datasets:
\emph{caltech101} \cite{feifei2004learning},
\emph{colorectal\_histology} \cite{kather2016multi},
\emph{oxford\_iiit\_pet} \cite{parkhi12a}, and
\emph{tf\_flowers} \cite{tfflowers}.
For each dataset, we train classifiers using transfer learning.
Starting with an Inception-v3
model trained on ImageNet \cite{russakovsky2015imagenet},
we adapt the model to classify images from the target dataset by
retraining the last layer of the network, as in \cite{donahue2014decaf}.
This approach yields strong, though not state-of-the-art, performance
on each problem.
Each dataset is split into
training, validation, and test sets as described in Appendix B.

We implemented \ref{alg:lll} in python, using
CVXPY \cite{diamond2016cvxpy} as the quadratic program solver,
and using AdaGrad \cite{duchi2011adaptive} for model training.

\subsection{Tuning Loss Function Hyperparameters} \label {sec:hyperparameter_tuning}

\newcommand{\luniform}{\l_{\mathrm{uniform}}}
\newcommand{\ldropout}{\l_{\mathrm{dropout}}}

We first consider using \ref{alg:tl} to tune the hyperparameters of a
hand-designed loss function.  Specifically, we use a loss function
with four regularization hyperparameters, of the form:
\[
  \l(\w) = \logloss(\w) + \lambda_1 \norm{\w}_1 + \lambda_2 \norm{\w}_2^2 + \lambda_3 \luniform(\w) + \lambda_4 \ldropout(\w)
\]
where $\luniform$ is the loss on a uniformly-labeled version of the training
dataset and $\ldropout$ is the loss using dropout with keep probability 0.5.
Training with this loss using SGD is equivalent to applying dropout to a given
example with probability $\frac {\lambda_4} {1 + \lambda_4}$,
and rescaling appropriately.

We compare \ref{alg:tl} to random search, Bayesian optimization
using GP-EI-MCMC \cite{snoek2012practical},
and the recent TuneReg algorithm \cite{streeter2019learning}.
All algorithms optimize performance on a validation set, and are evaluated
using a separate held-out test set.
While all four algorithms can optimize top-1 validation error directly, for \ref{alg:tl}
we instead optimize validation log loss, which allows us to take advantage of
gradient information.
All algorithms optimize over the same feasible set, defined in Appendix B.

\begin{figure} [h]
\begin{center}
\includegraphics[width=0.49\linewidth]{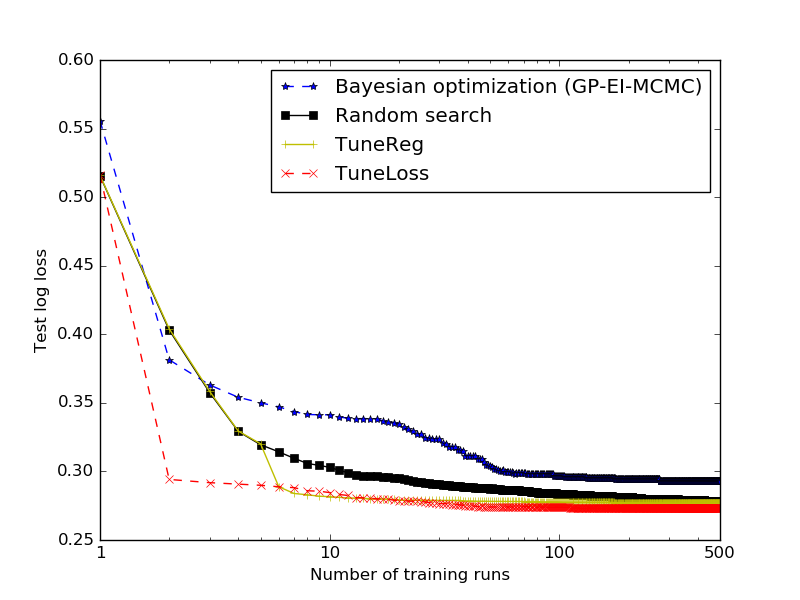}
\includegraphics[width=0.49\linewidth]{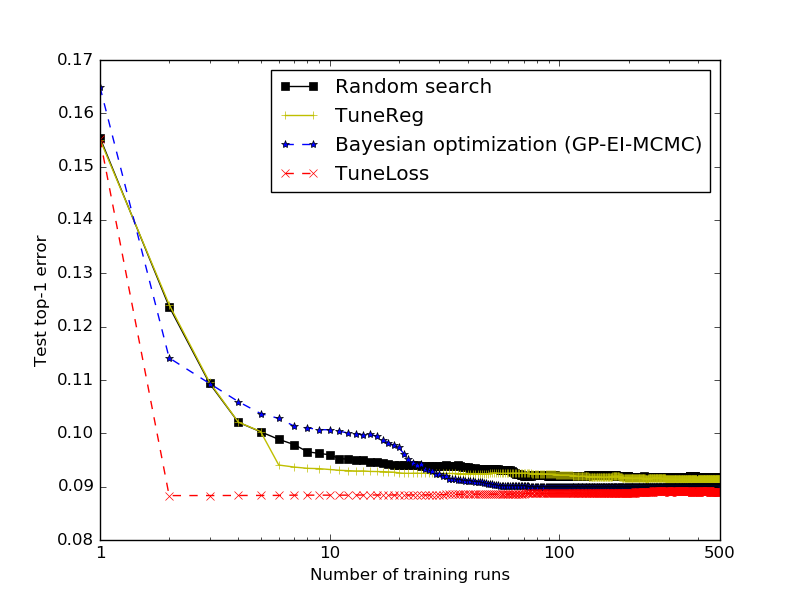}
\caption{Comparison of algorithms for tuning regularization hyperparameters.  Each curve is the average of 100 independent runs.  TuneLoss offers order of magnitude improvements in the number of training runs required to reach a given test loss or test error.}
\label{fig:hyperparameter_tuning}
\end{center}
\end{figure}

Figure~\ref{fig:hyperparameter_tuning} shows the test error and test loss
of each algorithm as a function of the number of training
runs performed, for the \emph{colorectal\_histology} dataset.
As is standard, we show the best test loss (resp.\ error) for the model with best-so-far
validation loss (resp.\ error).
Each curve is the average of 100 independent runs.  Observe that \ref{alg:tl}
offers {\bf order-of-magnitude} improvements in the number of training runs that must be performed to reach a given test error.  In particular, \emph{\ref{alg:tl} achieves better test error after 2 training runs than random search or GP-EI-MCMC achieve after 500 runs}.  We see similar improvements on all four datasets.

Table~\ref{tab:errors} shows the test error achieved by each
algorithm after 10 training runs, averaged over 100 runs of each algorithm.
\ref{alg:tl} reaches the lowest test error on all four datasets (italicized).

\begin{table*}[h]
	\caption{Test error for various hyperparameter tuning algorithms, after 10 training runs.}
  \label{tab:errors}
  \centering
	\begin{small}
	\begin{sc}
		\begin{tabular}{p{3.6cm}p{2cm}p{2cm}p{2.1cm}p{1.9cm}}
    \toprule
	  Dataset & TuneLoss (this paper) & Random search & GP-EI-MCMC \cite{snoek2012practical} & TuneReg \cite{streeter2019learning} \\
    \midrule

Caltech 101
& \emph{12.90\%} & 16.01\%& 18.00\%& 14.18\% \\

Colorectal histology
& \emph{8.84\%} & 9.60\%& 10.07\%& 9.32\% \\

Oxford IIIT Pet
& \emph{7.72\%} & 8.66\%& 9.11\%& 8.38\% \\

TF-Flowers
& \emph{9.57\%} & 10.63\%& 11.40\%& 10.03\% \\

\bottomrule
  \end{tabular}
  \end{sc}
  \end{small}
\end{table*}

\subsection{Learning Novel Regularizers Online} \label {sec:novel_regularizers}

Regularization is the subject of a vast literature.  In statistics,
a long line of research has focused on the functional form of the
regularizer \cite{frank1993statistical,fu1998penalized,hoerl1970ridge,tibshirani1996regression,zou2005regularization}.  In machine learning,
online algorithms such as AdaGrad \cite{duchi2011adaptive}
implicitly
use an adaptive proximal quadratic regularizer \cite{mcmahan2017survey},
but the regret-based analysis of such methods holds only
for a single pass over the training data.

\ref{alg:tl} has the potential to extend this work in two ways: (\emph{i})
it can \emph{learn} the functional form of the regularizer,
and (\emph{ii}) it can adapt the learned regularizer during training
to prevent overfitting,
even after passing over the training data many times.

To learn the functional form of the regularizer, we use TuneLoss
to learn a loss of the form:
\[
  \l(\w) = \logloss(\w) + \sum_{i=1}^n r(\w_i)\mbox{, where } r \in \conv(X)
\]
where $\conv(X)$ is the set of piecewise-linear convex functions of a single
variable, whose slope only changes at a predefined set $X$ of points.
As discussed in \ref{sec:applications}, $\l(\w)$ can be expressed as a linear
loss function using
a feature vector $\mfv(\w)$ of
length $1+2|X|$.
We learn the loss function online by having the
$\optimize$ subroutine
warm start from the latest checkpoint, and then perform one epoch of
AdaGrad, as discussed in \S\ref{sec:tuneloss}.
See Appendix B for additional details.

\begin{figure} [h]
  \begin{center}
    \includegraphics[width=0.49\linewidth]{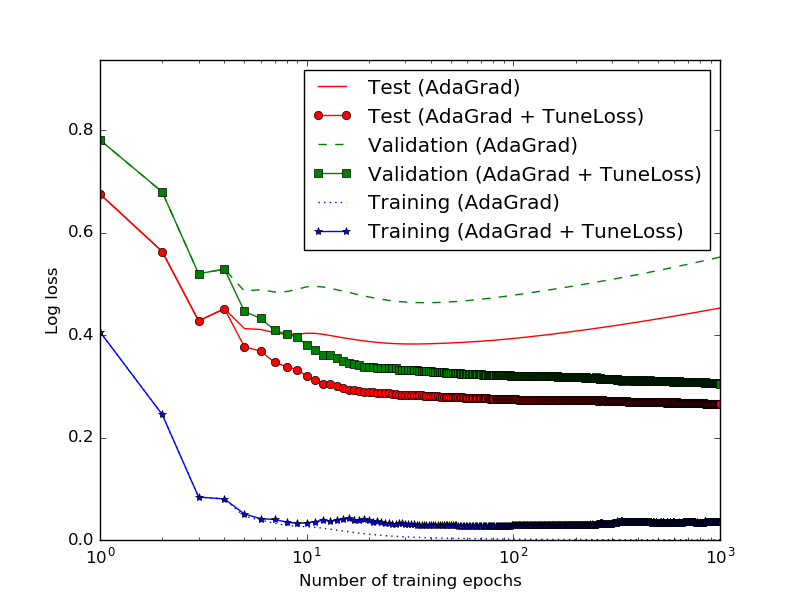}
    \includegraphics[width=0.49\linewidth]{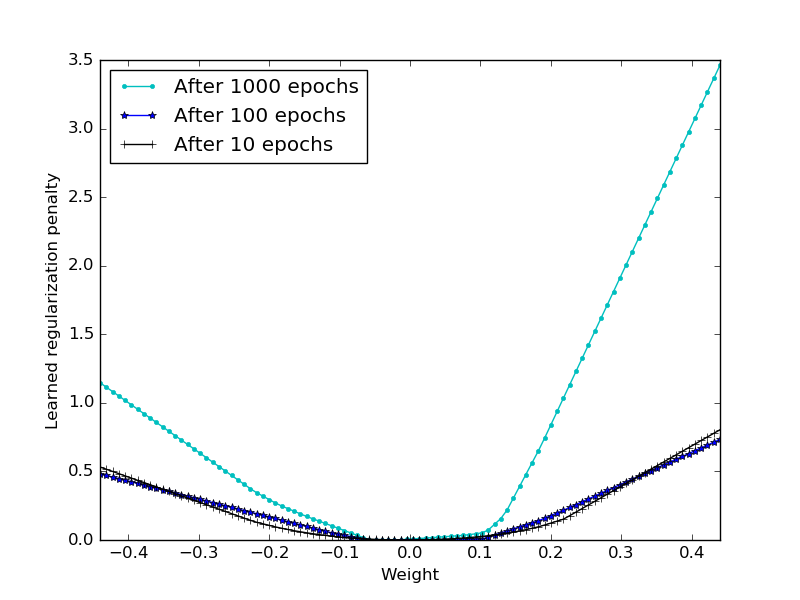}
    \caption{Evolution of training, validation, and test log loss when learning a regularizer online using TuneLoss (a), and the resulting
learned regularizers (b).  TuneLoss prevents overfitting,
and outperforms the result of extensive regularization hyperparameter tuning
(see Figure~\ref{fig:hyperparameter_tuning}).
}
  \label{fig:learned_regularizers}
\end{center}
\end{figure}

Figure~\ref{fig:learned_regularizers} shows results for the
\emph{colorectal\_histology}
dataset.  Plot (a) shows how how training, test, and
validation log loss change as a function of the number of epochs, compared to
a run that does not use regularization.
Plot (b) shows the regularizers that were learned after 10, 100, and 1000 epochs.  Observe that:
\begin{itemize}
  \item While AdaGrad starts overfitting after \textasciitilde35 epochs,
  AdaGrad + \ref{alg:tl} continues to improve log loss (both validation and test) even after training for 1000 epochs.
  \item The learned regularizer changes over time, with stronger regularization used later in training.
  \item During the course of a single training run, AdaGrad + \ref{alg:tl}
  reach test loss (and test error) better than that of
  the best linear combination of four regularizers (L1, L2, label smoothing, and  dropout) shown in Figure~\ref{fig:hyperparameter_tuning}.
\end{itemize}

We obtain similar improvements on the other three datasets.  In all cases,
validation and test log loss decrease monotonically when using \ref{alg:tl}.

These large improvements in log loss also lead to small but
not statistically significant improvements
in top-1 error.  Nevertheless, in applications where one
is interested in the actual values of the predicted probabilities (as is
likely the case for \emph{colorectal\_histology}), improvements in
log loss have significant practical benefit.

\section{Related Work} \label {sec:related}

Our work extends recent work by \citet{streeter2019learning}
on learning optimal linear regularizers.  In particular,
our \ref{alg:lll} algorithm simplifies and generalizes the LearnLinReg
algorithm of \cite{streeter2019learning}.  Critically,
\ref{alg:lll} can make use of gradient information
in order to learn an effective loss function more efficiently,
allowing us to take on problems with a much larger
number of hyperparameters (see \S\ref{sec:novel_regularizers}).

Our work shares the same goals as recent work on ``learning to teach''
\cite{fan2018learning,wu2018learning}, which uses a neural network
to learn a loss function during training.  Though this work showed
improvements in top-1 error, it is not clear how its efficiency compares to
that of existing hyperparameter tuning methods.

As a hyperparameter tuning method that makes use of gradients, \ref{alg:lll}
may at first appear similar to gradient-descent-based
methods \cite{bengio2000gradient,fan2018learning,pedregosa2016hyperparameter,wu2017bayesian}.  These methods differentiate validation loss
with respect to the \emph{hyperparameters}, which generally requires unrolling
the entire optimization process \cite{fan2018learning}
or approximating the gradients \cite{pedregosa2016hyperparameter}.
In contrast, \ref{alg:lll} differentiates
validation loss with respect to the \emph{model parameters}, and
learns hyperparameters that make these gradients match the gradients of the
learned loss function, a fundamentally different approach.

Finally,
a number of specialized algorithms have been developed for tuning regularization
hyperparameters.
Given a known data distribution,
\citet{liang2009asymptotically} provides a technique for deriving
a quadratic approximation to the expected test loss, which can then
be used to estimate optimal regularization hyperparameters.
More recently, \citet{mackay2019selftuning}
uses hypernetworks
to approximate the optimal model weights as a function of the regularization
hyperparameters.  Though promising, none of these techniques have been
shown to provide
improvements comparable to the ones shown in \S\ref{sec:hyperparameter_tuning}.

\section{Conclusions} \label {sec:conclusions}

Learning linear loss functions is a fundamental problem with many
interesting applications.  Though the problem is NP-hard, the \ref{alg:lll}
algorithm is provably efficient in an idealized easy case, and appears to
work well in practice.  In particular, this algorithm can be used to
(\emph{i}) solve certain hyperparameter tuning problems very efficiently, and
(\emph{ii}) prevent overfitting by learning an effective regularizer on-the-fly
during training.

\bibliographystyle{plainnat}
\bibliography{learnloss}

\begin{appendices}

\section{Proofs}

We now present a formal proof of Theorem~\ref{thm:hardness}.

\newcommand{\ind}{\mathbb{I}}

\begin{theoremN}{1}
Minimizing \eqref{eq:bilevel_linear} is NP-hard, even in the special case when
$\W = [0, 1]^n$, $\feasible = \reals^k$, $\mfv(\w) = \w$, $\ve(\w)$ is a convex function, and $\ve(\w) = \err(\w) \ \forall \w \in \W$.
\end{theoremN}
\begin{proof}
The proof sketch in the main text showed that, under the assumptions given in the theorem statement, minimizing \eqref{eq:bilevel_linear} is equivalent to solving the optimization problem:
\begin{equation} \label{eq:min_validation}
        \min_{x \in \set{0, 1}^n} \set { \ve(x) } \ee
\end{equation}
It remains to show that this problem is NP-hard when $\ve$ is an arbitrary
convex function.

We show this using a reduction from {\sc 0/1 Integer Programming}.
In an instance of {\sc 0/1 Integer Programming}, we are given as input a
vector $c \in \reals^n$, a matrix $A \in \reals^{m \times n}$, and a vector $b \in \reals^{m}$, and we wish to solve the optimization problem:
\begin{equation} \label {eq:zeroone}
        \min_{x \in \set{0, 1}^n} \set { c \cdot x  } \mathrm{\ subject\ to\ } Ax \le b \ee
\end{equation}
To reduce \eqref{eq:zeroone} to \eqref{eq:min_validation},
let $\ve(x) = c \cdot x + \ind(x)$, where $\ind(x)$
is an indicator for whether the constraint $A x \le b$ is satisfied:
\[
        \ind(x) = \begin{cases}
          0 & \text{ if } Ax \le b \\
          \infty & \text{otherwise.}
        \end{cases}
\]
It can be easily verified that $\ind(x)$ is a convex function, and
therefore $\ve(x)$ is convex.
By construction, minimizing $\ve(x)$ is equivalent to minimizing
$c \cdot x$ subject to $A x \le b$.
Minimizing \eqref{eq:zeroone} is therefore equivalent to minimizing \eqref{eq:min_validation},
and minimizing \eqref{eq:min_validation} is therefore NP-hard.
\end{proof}

\section{Details of Experiments}

We now provide additional details of our experiments that were omitted
from the main text.

\subsection{Methods}

We split the each of the four public datasets into training, validation, and
test sets, whose sizes are given in Table~\ref{tab:splits}.

For two of the datasets (\emph{caltech101} and \emph{oxford\_iiit\_pet}),
the original dataset was already divided into training
and test images.  In those cases we used the original training set as our
training set, and split the original test set randomly into validation and test
sets.  For the remaining two datasets (\emph{colorectal\_histology} and \emph{tf\_flowers} ),
we split the entire dataset randomly
into training, validation, and test sets.

\begin{table*}[h]
\caption{Sizes of training, validation, and test sets.}
\label{tab:splits}
\centering
  \begin{small}
  \begin{sc}
  \begin{tabular}{p{3.8cm}p{2cm}p{2cm}p{2.1cm}p{1.9cm}}
    \toprule
    Dataset & \#Training & \#Validation & \#Test \\
    \midrule

Caltech 101 & 3060 & 3371 & 3370 \\

Colorectal histology & 1667 & 1667 & 1666 \\

Oxford IIIT Pet & 3680 & 1835 & 1834 \\

TF-Flowers & 1670 & 1000 & 1000 \\

\bottomrule
  \end{tabular}
  \end{sc}
  \end{small}
\end{table*}

\subsection{Hyperparameter Tuning}

As discussed in the main text, our experiments tune the hyperparameter of
a loss function of the form:
\[
  \l(\w) = \logloss(\w) + \lambda_1 \norm{\w}_1 + \lambda_2 \norm{\w}_2^2 + \lambda_3 \luniform(\w) + \lambda_4 \ldropout(\w)
\]

The feasible set $\feasible$ is a hypercube, defined by the constraints
$\lambda_1 \in [.1, 100]$, $\lambda_2 \in [.1, 100]$, $\lambda_3 \in [0, .1]$,
and $\lambda_4 \in [0, 1]$.
For each hyperparameter, the feasible range
was determined by performing a one-dimensional grid search on one of the datasets (\emph{tf\_flowers}), and choosing a feasible range that included all the values that appeared to
have a chance of performing well.

To apply \ref{alg:lll} to this problem, we must formally define an additional
hyperparameter $\lambda_5$, which acts as a
multiplier on $\logloss(\w)$ and whose value is constrained to be 1.
When then use \ref{alg:lll} to learn a linear loss function of the form
$\l(\w) = \lambda \cdot \mfv(\w)$, where $\mfv(\w) = \tup{ \norm{\w}_1, \norm{\w}_2^2, \luniform(\w), \ldropout(\w), \logloss(\w)}$.

\newcommand{\squaredlength}[1]{\|#1\|^2}
\newcommand{\frobsq}[1]{\|#1\|^2_F}

The \ref{alg:lll} takes as input a hyperparameter $\epsilon \ge 0$.  We did not
tune this hyperparameter, but instead set its value in a heuristic way so
as to approximately equalize the contributions of the two penalty terms
in \eqref{eq:cost} (the loss-matching and gradient-matching penalties).
We achieve this by setting $\epsilon = \frac {\sum_{i=1}^m \squaredlength{\llgradi{i}}} {\sum_{i=1}^m \frobsq{\lljacobi{i}}}$, where for any matrix
$J$, we use $\frobsq{J}$ to denote the squared Frobenius norm.

When training using AdaGrad, we use a batch size of 1 and a learning rate multiplier of .1.

\subsection{Learning Novel Regularizers Online}

In these experiments, we used \ref{alg:tl} to learn a
problem-specific regularizer $r \in \conv(X)$.
The set $X$ was of size 50, and was obtained by looking at the model
weights after training for one epoch.  Specifically, $X = \set{x_1, x_2, \ldots, x_{50}}$, where the $x_i$ values are indexed in ascending order, and are chosen so that $\frac {n} {49} \pm 1$ weights fall into each interval $[x_i, x_{i+1}]$, where $n$ is the total number of weights.

The \ref{alg:tl} algorithm requires as input a set of initial models, $\W_0$.  In these experiments, we obtain these models by running AdaGrad for one epoch at a time, stopping at the end of a certain epoch $t$.  This is equivalent to using AdaGrad for the first $t$ epochs of training, then switching to AdaGrad + \ref{alg:tl} for epochs $t+1$ onward (in Figure~\ref{fig:learned_regularizers}, $t=4$, which is why the first four data points are identical for both algorithms).  We choose $t$ by evaluating validation loss at the end of each epoch, and stopping as soon as we observe validation loss worse than at the end of the previous epoch (i.e., as soon as AdaGrad first begins to overfit the validation set).

\end{appendices}

\end{document}